\DeclareMathOperator*{\argmin}{\arg\min}
\newcommand{\expected}{\mathbb{E}}
\newtheorem{lemma}{Lemma}
\newtheorem{proposition}{Proposition}
\newcommand\lm{p_\mathrm{LM}}
\newcommand\lmp{p_{\mathrm{LM}'}}
\newcommand\hyp{\tilde{p}}
\newcommand{\Xg}{X_\mathrm{G}}
\newcommand{\Xl}{X_\mathrm{L}}
\newcommand{\defeq}{\triangleq}
\newcommand\pglobal{\tilde{p}_\mathrm{G}}
\newcommand\plocal{\tilde{p}_\mathrm{L}}
\newcommand\pinterpadd{\tilde{p}_+^\lambda} \newcommand\pinterpmul{\tilde{p}_\times^{\lambda_1, \lambda_2}}
\title{
   How Do Neural Sequence Models Generalize? \\
   Local and Global Context Cues for Out-of-Distribution Prediction
}
\author{Anthony Bau \and Jacob Andreas \\
  Computer Science and Artificial Intelligence Laboratory \\
  Massachusetts Institute of Technology \\
  \{abau, jda\}@mit.edu}
\begin{document}
\maketitle

\begin{abstract}
    After a neural sequence model encounters an unexpected token, can its behavior be predicted?
    We show that RNN and transformer language models exhibit structured, consistent generalization in out-of-distribution contexts.
    We begin by introducing two idealized models of %
    generalization in next-word prediction: a \emph{local context model} in which generalization is consistent with the last word observed, and a \emph{global context model} in which generalization is consistent with the global structure of the input. In experiments in English, Finnish, Mandarin, and random regular languages, we demonstrate that neural language models \emph{interpolate} between these two forms of generalization: their predictions are well-approximated by a log-linear combination of local and global predictive distributions. We then show that, in some languages, \emph{noise} mediates the two forms of generalization: noise applied to input tokens encourages global generalization, while noise in history representations encourages local generalization.
    Finally, we offer a preliminary theoretical explanation of these results by proving that the observed interpolation behavior is expected in log-linear models with a particular feature correlation structure.
    These results help explain the effectiveness of two popular regularization schemes and show that aspects of sequence model generalization can be understood and controlled.
\end{abstract}

\section{Introduction}
\label{sec:intro}

\begin{figure}[t!]
    \centering
    \includegraphics[width=0.95\columnwidth,clip,trim=0 0 0 0.1in]{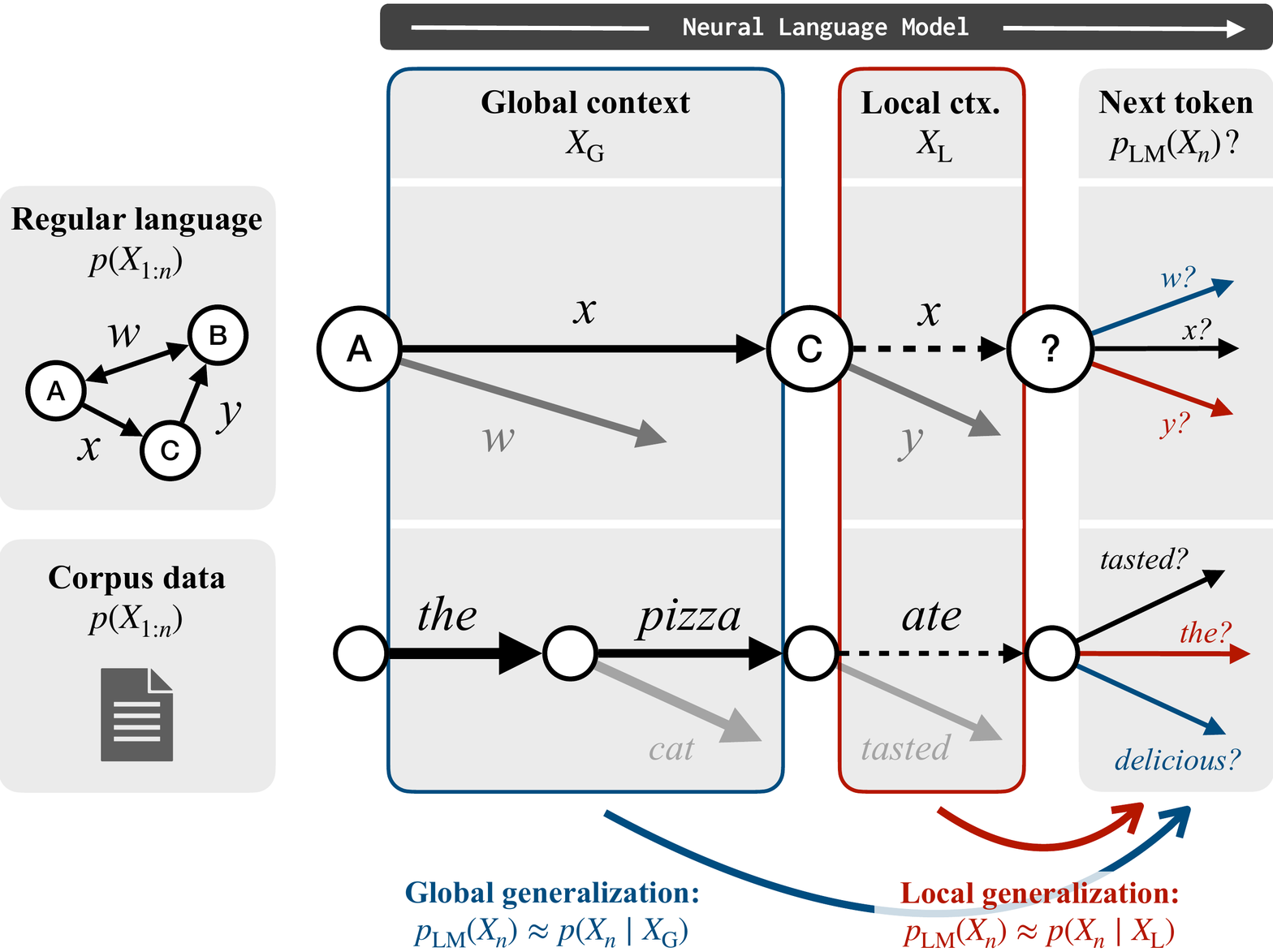}
    \includegraphics[width=\columnwidth,clip,trim=0 2.9in 0 0]{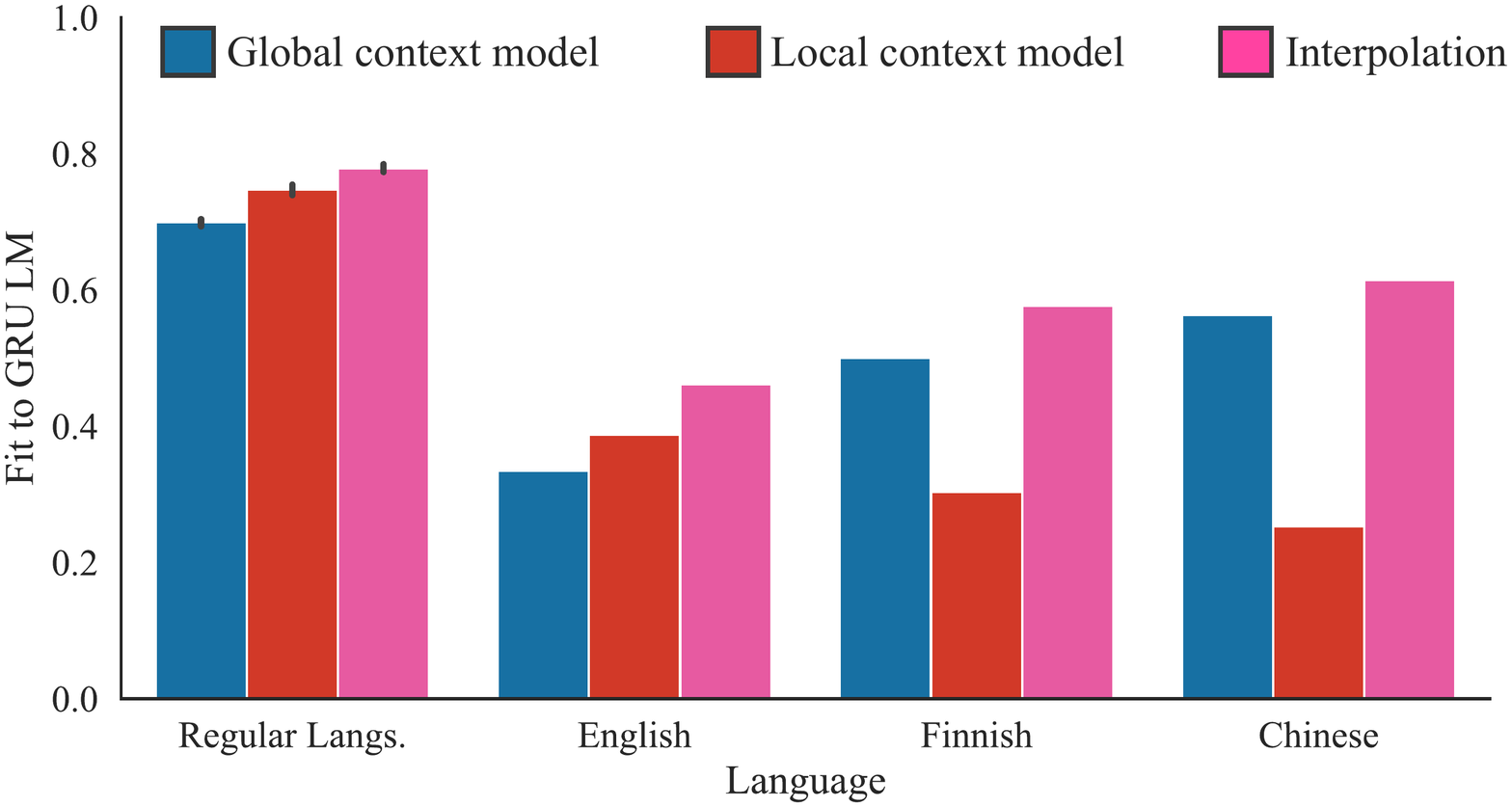}
    \caption{We develop formal models of the predictions of neural language models in \emph{surprising contexts} in which local information (e.g.\ the most recent token) and global information (e.g.\ the rest of the sentence) conflict (top). In these out-of-distribution contexts, predictors trained on both synthetic and natural languages favor either local or global information, but are best approximated by an \emph{interpolation} of a local-only and global-only predictor (bottom).}
    \label{fig:teaser}
\end{figure}

Neural language models (LMs) play a key role in language processing systems for tasks as diverse as machine translation, dialogue, and automated speech recognition \citep{baziotis2020language,sordoni2015neural,mikolov2010recurrent}.
These LMs, which model distributions over words in context via recurrent, convolutional, or attentional neural networks,
have been found to consistently outperform finite-state approaches to language modeling based on hidden Markov models \cite{kuhn1994ergodic} or $n$-gram statistics \cite{miller1950verbal}.
But improved predictive power comes at the cost of increased model complexity and a loss of transparency. While it is possible to characterize (and even control) how finite-state models will behave in previously unseen contexts, generalization in neural LMs is not nearly as well understood.

Consider the following sentence prefixes:
\begin{itemize}[noitemsep]
    \item[(a)] \textit{The pandemic won't end children can\ldots}
    \item[(b)] \textit{Let him easter\dots}
    \item[(c)] \textit{After we ate the pizza, the pizza ate\dots}
\end{itemize}
Each of these prefixes should be assigned a low probability under any reasonable statistical model of English: (a) is missing a word, (b) has a noun used in place of a verb, and (c) a features a selectional restriction violation.\footnote{In fact, all three are examples of naturally occurring text: (a) was originally published (as a typo) on the New York Times homepage \cite{nyttweet}, (b) appears in \citet{hopkins1918} and (c) in by \citet{bench2013}.}
When exposed to these surprising contexts, what word will language models predict next? For finite-state models of language, the answer is clear: $n$-gram models back off to the shortest context in which statistics can be reliably estimated (e.g.\ just the final word; \citealt{katz1987estimation}), and hidden Markov models explicitly integrate the possibility of an unexpected part-of-speech transition and an unexpected word choice \cite{freitag1999information}.
But in neural models, model behavior in-distribution provides little insight into behavior in novel contexts like the ones shown in (a--c).

Characterizing neural LMs' behavior on inputs like these is important for many reasons---including evaluating their robustness, characterizing their effectiveness as models of human language processing, and identifying inductive biases relevant to deployment in new tasks.
This paper offers three steps toward such a characterization:

\begin{enumerate}
    \item 
        We present an empirical description of neural LM behavior in out-of-distribution contexts like the ones shown in (a--c). We introduce two idealized models of prediction in these contexts:  a \emph{local context model} in which generalization is consistent with the last word observed (ignoring global sentence structure), and a \emph{global context model}, in which generalization is consistent with the global structure of the input (ignoring unexpected words).
        In experiments on English, Finnish, Mandarin, and a collection of random regular languages,
        we show that neural LM behavior is reasonably well approximated by either the local or global context model, and even better predicted by an \emph{interpolation} of the two: neural LMs reconcile conflicting information from local and global context by modeling their contributions independently and combining their predictions post-hoc (\cref{fig:teaser}).
        
    \item
        We further show that, in regular languages, \emph{noise} introduced at training time modulates the relative strength of local and global context in this interpolation: input noise (in the form of random word substitution) encourages global generalization, while history noise (dropout applied to recurrent states or self-attention layers) encourages local generalization. These effects are small, but point toward a potential role for noise-based regularization schemes in controlling out-of-distribution behavior.
        
    \item
        Finally, we offer a preliminary mathematical explanation of the observed results by demonstrating that this interpolation behavior arises in any regularized log-linear model with separate local and global context features that are individually predictive of future tokens.
\end{enumerate}

Despite the complexity of current neural LMs, these results show that aspects of their out-of-distribution generalization can be characterized, controlled, and understood theoretically.

\section{Background}
\label{sec:background}

\paragraph{Generalization in count-based LMs}

Before the widespread use of neural approaches in NLP, statistical approaches to language modeling were typically \emph{defined} by explicit independence assumptions governing their generalization in contexts never observed in the training data. For example, $n$-gram models \citep{miller1950verbal,shannon1951prediction} ignore global sentence structure in favor of a local context of at most $n$ words. By contrast, latent-variable language models based on finite-state machines \citep{kuhn1994ergodic} (or more expressive automata; \citealt{chelba1998exploiting}, \citealt{pauls2012large}) explicitly incorporate information from the long-range context by conditioning next-word prediction on abstract global states constrained by global sentence structure.
In models of both kinds, behavior in contexts unlike any seen at training time is be explicitly specified via \emph{backoff} and \emph{smoothing} schemes aimed at providing robust estimates of the frequency of rare events \cite{good1953population,katz1987estimation,kneser1995improved}. Like past work on backoff and smoothing, our work in this paper attempts to provide a general mechanism for both prediction and control in more complex, black-box neural LMs.

\paragraph{Generalization in feature-based and neural LMs}

Such mechanisms are necessary because,
with the advent of feature-rich approaches to language modeling---including log-linear models \citep{rosenfeld1996maximum} and neural network models \citep{bengio2003neural,mikolov2010recurrent,vaswani2017attention}---the kinds of structured, engineered generalization available in finite-state models of language have largely been lost. Current models clearly generalize to new linguistic contexts (including those with semantic content very different from anything seen at training time; \citealt{radford2019language}). But the precise nature and limits of that generalization---especially its robustness to unusual syntax and its ability to incorporate information about global sentence structure---remain a topic of ongoing study.

Current work largely focuses on controlled, linguistically motivated tests of generalization: measuring models' ability to capture long-range agreement, movement, and licensing phenomena on diagnostic datasets \citep[][]{gauthier2020syntaxgym}.
For example, \citet{linzen-etal-2016-assessing} show that while RNNs are capable of storing the information necessary to enforce subject--verb agreement, the language modeling training objective does not encourage it; \citet{mccoy2020does} demonstrate that RNN models for a question formation task favor linear generalizations over hierarchical ones (roughly, lexical generalizations over syntactic ones) on out-of-distribution inputs. Rather than focusing on a specific language or class of linguistic phenomenon, our work in this paper aims to provide a general-purpose framework for reasoning about generalization in neural sequence models across contexts and languages.

\paragraph{Generalization beyond NLP}

The generalizations investigated in paper involve instances of
covariate shift---a change in the distribution $p(x)$ for a conditional model $p(y \mid x)$---which has been extensively investigated in more general machine learning settings \citep[e.g.][]{storkey2009training}.
Outside of NLP,
there have been several attempts to describe more abstract inductive biases native to RNNs and transformers, including work 
focused on compositionality \citep{liska2018memorize,lake2018generalization,weber2018fine} and even more generic algorithmic priors
\citep{lan2021minimum, kharitonov2020they}. 
Here we focus on the architectures and context shifts relevant to language processing tasks.
We validate our models of generalization using real models trained on natural data and explain them in terms of measurable properties of these data distributions.

\section{Models of Generalization}
\label{sec:models}

Consider the example contexts shown in (a--c). Each is an extremely unlikely sentence prefix, featuring text that is globally inconsistent with English syntax or semantic constraints. In such contexts, is it possible to predict \emph{a priori} what a neural LM trained on language data will do next?

We can formalize the situation depicted in these examples as follows: Let $p(X_{1:n}) = p(X_1, X_2, \ldots, X_n)$ be a distribution over sentences with tokens $X_i$, and let
\begin{equation}
    \lm(X_{1:n}) = \prod_{i=1}^n \lm(X_i \mid X_{1:i-1})
\end{equation}
be a learned approximation to this distribution produced by an autoregressive model of the conditional distribution $\lm(X_n \mid X_{1:n-1})$.
We will consider each \textbf{context} $X_{1:n-1}$ to comprise a \textbf{global context} $\Xg = X_{1:n-2}$ (all but the last word) and a \textbf{local context} $\Xl = X_{n-1}$ (the last word in the context). Then, given some thresholds $\epsilon$ and $\tau$, we will call a context $(\Xg, \Xl)$ \textbf{surprising} if for some thresholds $\epsilon$ and $\tau$,
\begin{align}
\label{eq:eps}
    \lm(\Xl | \Xg) < \epsilon \\
    \intertext{(the juxtaposition of $\Xg$ and $\Xl$ is low-probability), while:}
    \lm(\Xl) > \tau \\
    \intertext{and each for each $i$}
    \lm({\Xg}_{,i} \mid {\Xg}_{,1:i}) > \tau
    \label{eq:tau}
\end{align}
($\Xg$ and $\Xl$ are high-probability marginally). In example (c), $\Xg = $ \emph{the pizza}, $\Xl =$ \emph{ate}. Given a language model $\lm$, we wish to understand whether $\lm(X \mid \Xg, \Xl)$ has systematic or predictable structure in surprising contexts---can it be explained in terms of statistics of the underlying distribution $p$ or the behavior of $\lm$ in unsurprising contexts?

In the remainder of this section, we describe a set of candidate hypotheses about what this next-token distribution might look like, and in \cref{sec:experiments} evaluate the extent to which these hypotheses accurately predict the true behavior of $\lm$.

\subsection{Local and global models of generalization}
\label{sec:models-base}

We focus on two idealized models of the generalization that might be exhibited by neural language models.

\paragraph{Local context model}

In this model, we hypothesize that predictors reconcile the conflicting information from $\Xg$ and $\Xl$ by ignoring the global component of the context, and making the next-token distribution \emph{locally} consistent with the last token seen, regardless of global sentence structure. We denote this model of generalization $\plocal$:
\begin{equation}
    \plocal(X_n \mid \Xg, \Xl) \defeq p(X_n \mid \Xl) ~ .
\end{equation}
$\plocal$ implements a form of backoff common in $n$-gram language models: faced with a long context in which the data distribution is unknown, models discard long-range information and use higher-quality estimates from a shorter context.
We previously defined $\Xl = X_{n-1}$, so experiments with $\plocal$ will predict that neural LMs behave like bigram models; this could be naturally generalized to local contexts consisting of more than a single word.

$\plocal$ can also be viewed as the hypothesis that NLMs implement a particular kind of lossy-context model \citep{futrell-levy-2017-noisy,futrell2020lossy}, who note that ``local contextual information plays a privileged role in [human] language comprehension''; as we will see, this appears to be the case for some neural models as well. Sequence models with backoff may also be given a hierarchical Bayesian interpretation \citep{teh2006bayesian}.

\paragraph{Global context model}

As an alternative, we consider the possibility that predictors rely \emph{exclusively} on the global component of the context, ignoring the unexpected final token:
\begin{align}
    \pglobal(X_n \mid \Xg, \Xl) &\defeq p(X_n \mid \Xg) \nonumber \\
    &= \sum_v p(X_n \mid X_{n-1} = v, \Xg) \nonumber \\
    & ~~~~ \times p(X_{n-1} = v \mid \Xg) ~ .  \hspace{-2pt}
\end{align}
In the language of count-based models, this amounts to the hypothesis that NLMs generalize as \emph{skip-gram} models \citep{goodman2001bit,guthrie2006closer}, performing a kind of reverse backoff to context prior to the most recent word. In the global context model, it is the most recent word, and not the rest of the context, that as treated as a possible source of noise to be marginalized out rather than conditioned on.

\subsection{Interpolated models}
\label{sec:models-interp}

Even when combined in surprising ways, both the local and global context are likely to carry useful information about the identity of the next word. Indeed, models and features implementing both kinds of context representation have been found useful in past work on language modeling \citep{goodman2001bit}.
It is thus natural to consider the possibility that neural LMs \emph{interpolate} between the local context and global context models, combining evidence from $p(X_n \mid \Xl)$ and $p(X_n \mid \Xg)$ when there is no evidence for the specific context $p(X_n \mid \Xl, \Xg)$.

We consider two ways in which this evidence might be combined:

\paragraph{Linear interpolation} In this model,
\begin{align}
    \pinterpadd \defeq \lambda \cdot \plocal + (1 - \lambda) \cdot \pglobal ~ .
\end{align}
Here we predict generalization according to a direct weighted combination of $\plocal$ and $\pglobal$, with the relative importance of the two hypotheses controlled by a parameter $\lambda \in [0, 1]$. Informally, this hypothesis assigns non-negligible probability to next tokens that are consistent with either base hypothesis. Similar interpolation schemes were proposed for $n$-gram modeling by \citet{jelinek1980interpolated}.

\paragraph{Log-linear interpolation} In this model,
\begin{align}
\label{eq:llinterp}
    \pinterpmul &\stackrel{\triangle}{\propto} \pglobal^{\lambda_1} \cdot \plocal^{\lambda_2} ~ .
\end{align}
That is
\begin{align}
    \log \pinterpmul &= \lambda_1 \log \pglobal + \lambda_2 \log \plocal - \log Z
\end{align}
for $\lambda_1$ and $\lambda_2 \in [0, 1]$ and some contextual normalizing constant $Z$ that depends on $X_{1:n-1}$. Here, probabilities from the two base hypotheses are added in log-space then renormalized; informally, this has the effect of assigning non-negligible probability to next tokens that are consistent with \emph{both} base hypotheses. A similar approach was proposed for count-based language modeling by \citet{klakow1998log}.

\section{Experiments}
\label{sec:experiments}

Which of these models (if any) best describes the empirical behavior of neural LMs trained on real datasets? In this section, we present two sets of evaluations. The first aims to characterize how well $\plocal$, $\pglobal$, and combinations of the two predict the out-of-distribution behavior of RNN \citep{elman1990finding} and transformer \cite{vaswani2017attention} language models with standard training. The second explores whether these training procedures can be modified to control the relative strength of local and global generalization.

Both sets of experiments investigate the behavior of RNN and transformer LMs on a diverse set of datasets: first, a collection of \emph{random regular languages} in which the true data distribution $p(X)$ can be precisely modeled; second, a collection of \emph{natural language} datasets from three languages (Mandarin Chinese, English, and Finnish) which vary in the flexibility of their word order and the complexity of their morphology. We begin with a more detailed discussion of models and datasets in \cref{sec:experiments-prelim}; then describe generalization experiments in \cref{sec:experiments-base} and control experiments in \cref{sec:experiments-interp}.

\subsection{Preliminaries}
\label{sec:experiments-prelim}

\paragraph{Data: Formal languages}

The first collection of evaluation datasets consists of a family of \emph{random regular languages}. We begin by generating three deterministic finite automata, each with 8 states and a vocabulary of 128 symbols. Using the algorithm in \cref{appendix:proofs}, we randomly add edges to the DFA to satisfy the following constraints: (1) every state is connected to approximately 4 other states, and (2) each symbol appears on approximately 4 edges. States are marked as accepting with probability $\frac{1}{2}$.

Experiments on these carefully controlled synthetic languages are appealing for a number of reasons. First, because we have access to the true generative process underlying the training data, we can construct arbitrarily large training sets and surprising evaluation contexts $(\Xg, \Xl)$ that are guaranteed to have \emph{zero} probability under the training distribution, ensuring that our experiments cleanly isolate out-of-distribution behavior. Second, the specific construction given above means that ``evidence'' for the local and global models of generalization is balanced: no tokens induce especially high uncertainty over the distribution of states that can follow, and no states induce especially high uncertainty over the set of tokens they can emit, meaning that a preference for local or global generalization must arise from the model rather than the underlying data distribution.

In experiments on these datasets, we generate \emph{training} examples via a random walk through the DFA, choosing an out edge (or, if available, termination) uniformly at random from those available in the current state. We generate \emph{surprising} \emph{test} examples by again sampling a random walk, then appending a symbol that \emph{cannot} be produced along any out-edge from that random walk's final state. We compute $\plocal$ and $\pglobal$ using the ground-truth distribution from each DFA.

In regular languages, the local context model thus hypothesizes that \emph{lexical} information governs out-of-distribution prediction, predicting that LM outputs are determined by the set of states attached to an edge labeled with the surprising symbol. Conversely, the global context model hypothesizes that \emph{structural} information governs out-of-distribution prediction: LM outputs are determined by the set of states reachable from the last state visited before the surprising symbol.

RNN experiments use gated recurrent units \citep{cho2014learning} with a word embeddings of size 128 and a single hidden layer of size 256. Transformer experiments use a hidden size of 256, 4 self-attention layers, and ReLU nonlinearities. Both models are trained with the Adam optimizer and a learning rate of 3e$-$4 on 128,000 examples.

\paragraph{Data: Natural languages}

The second collection of evaluation datasets use natural language data. We conduct experiments on English, Finnish, and Mandarin Chinese. These languages exhibit 
varying degrees of morphological complexity and freedom of word order, with Finnish at one extreme (morphologically complex and freely ordered) and Mandarin at the other.

English data comes from the WMT News Crawl corpus \cite{barrault-etal-2019-findings}. We used a 20,000-sentence subset of sentences from articles from 2007, tokenized using the SentencePiece byte-pair encoding \cite{sentencepiece} with a vocabulary size of $2^{14}$. We used a 2,000-sentence held-out set for validation.
Finnish data comes from the Turku Dependency Treebank \cite{Haverinen2014}, and Chinese data from the Simplified GSD Treebank, both included in the
Universal Dependencies corpus \cite{Nivre2020UniversalDV}. These datasets are already tokenized; for the Chinese data we used the existing tokenization, limited by a vocabulary size of $2^{14} - 2$ with added ``unknown'' and ``end-of-sentence'' tokens. For Finnish we also used the SentencePiece byte-pair encoding with a vocabulary size of $2^{14}$.

\begin{figure*}[t!]
    \centering
    \includegraphics[width=0.98\textwidth,clip,trim=0 0 0 1cm]{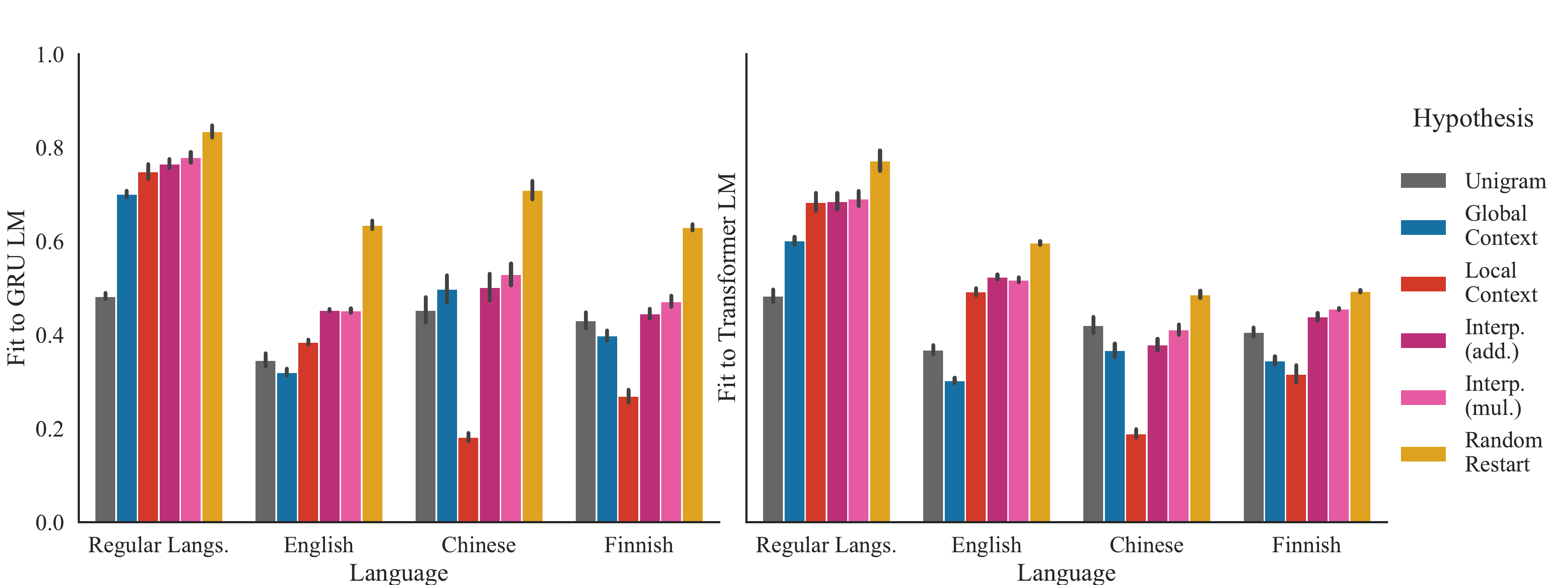}
    \vspace{-.5em}
    \caption{Accuracy $\textrm{acc}(\tilde{p}, \lm)$ of predicted generalization for various hypotheses $\tilde{p}$, with black lines showing one standard deviation across 5 (GRU) or 4 (transformer) random restarts. In some cases, generalization hypotheses are nearly as predictive as new neural models trained on the same data, suggesting that they explain most of extrapolation behavior that can be derived from data alone. Multiplicative interpolation is consistently a bit better than additive interpolation. Which of the two base hypotheses performs best (global or local generalization) varies substantially across languages. %
    }
    \label{fig:base}
\end{figure*}

To generate surprising natural language sentences $(\Xg, \Xl)$, we first select $\Xg$ by truncating sentences from the validation set to uniformly random lengths. We then run our best trained model $\lm$ to determine $\lm(X_{n-1} | \Xg)$, and choose a token $\Xl$ uniformly from among the set $\{X\ :\lm(X | \Xg) < \frac{1}{198}, X \in L\}$ where $L$ is the set of the 198 most-common tokens by unigram count (200 less the ``unknown'' and ``end-of-sentence'' tokens used in Chinese). In the framework of Equations \ref{eq:eps}--\ref{eq:tau}, $\epsilon$ is set to $\frac{1}{198}$ and $\tau$ to the smallest probability assigned in context to an in-distribution token.

To compute generalization model predictions on natural language data, we estimate $\plocal$ from bigram counts in the training set: $\plocal(X_n | X_L) = \frac{\text{count}(X_n, X_L)}{\text{count}(X_L)}$. To estimate $\pglobal$, we train a second, random restart of the model $\lmp$. We then estimate $\pglobal$ using one step of beam search in $\lmp$ with a beam of size 15:
\begin{multline}
\label{eq:beam}
\pglobal(X_n | \Xg) \\ \approx \sum_{i = 1}^{15} \lmp(v_i | \Xg) \lmp(X_n | \Xg, v_i)
\end{multline}
where the $v_i$ range over the 15 top predicted tokens after $\Xg$. Given a trained model that performs well on the \textit{in-distribution} validation set, we will have $\lm(v | \Xg) \approx p(v | \Xg)$ and $\lm(X_n | X_G, v) \approx p(X_n | X_G, v)$, and therefore that \cref{eq:beam} gives a good approximation of $\pglobal$.\footnote{We compute this quantity with a second language model in order to prevent information about the true model's out-of-distribution behavior from leaking into our prediction.}

For each natural language datasets, we trained GRUs with 2 hidden layers and word embedding and hidden sizes of 1024, and transformers with 4 heads, 2 layers, and hidden sizes of 512. All models were optimized with Adam using a learning rate of 3e-4 on shuffled length-aligned batches of up to 128 for 15 epochs. The model with the best held-out performance was then selected.

\subsection{Which model of generalization fits best?}
\label{sec:experiments-base}

Given a dataset of surprising contexts $\{(\Xg, \Xl)_i \}$, a hypothesis $\hyp$, and a trained model $\lm$, we compute the accuracy of the hypothesis $\hyp$ as
\begin{align}
\label{eq:acc}
    &\text{acc}(\hyp, \lm) = 1 - \text{err}(\hyp, \lm)
\end{align}
where
\begin{align}
    &\text{err}(\hyp, \lm) = \nonumber \\
    &\qquad \frac{1}{n}\sum_{\Xg, \Xl} \delta(\lm(\cdot \mid \Xg, \Xl), \hyp(\cdot \mid \Xg, \Xl)) \nonumber
\end{align}
and $\delta$ is the total variation distance
\begin{align}
    \delta(p_1, p_2) = \frac{1}{2} \| p_1 - p_2 \|_1 ~ .
\end{align}
In other words, we measure the accuracy of each hypothesis by computing the average $\ell_1$ distance between the hypothesized and true probability histograms across surprising contexts.
$\text{acc}(\hyp, \lm)$ is between 0 and 1; a large value indicates that $\hyp$ is a good approximation to $\lm$.\footnote{The use of a bounded distance measure is important---intuitively, we should regard a hypothesis as accurate even if it makes very inaccurate predictions in a small fraction of contexts. However, the specific choice of measure does not appear to be very important; defining $\textrm{err}$ in terms of the Jensen--Shannon divergence gives similar results. }

For hypotheses $\hyp$, we use (1) the local and global context (\cref{sec:models-base}), and (2) \emph{optimal} linear and log-linear interpolations between them (\cref{sec:models-interp}), choosing settings for $\lambda$ that minimize error on the evaluation set itself. To provide context for these results, we report the accuracy of a \textbf{unigram baseline}, which predicts the unigram distribution $p(X_n)$ independent of the context. We additionally report the error obtained by a \textbf{random restart}---a new model $p_{\theta'}$ trained from scratch (with a different initialization) on the same data as $\lm$; this model provides a rough upper bound on how much of $\lm$'s prediction can be explained by structural properties of the data distribution itself. $\mathrm{err}$ was computed from 100 samples for regular languages and 200 samples for natural languages.

Results are shown in \cref{fig:base}. 
In each language, either the local or global model is a good fit to observed generalization behavior. There are substantial differences across languages: the local context model is a good predictor for regular languages and English, but a poor predictor for Finnish and Chinese, suggesting that generalization behavior is data-dependent but not tied to word order or morphological complexity. In general, GRU generalization is more predictable than transformer generalization.
Finally, interpolation often substantially outperforms either base hypothesis (\cref{fig:interp-curve}), with log-linear interpolation slightly more predictive than linear interpolation. In several cases, interpolated hypotheses approach the accuracy of a randomly retrained predictor, suggesting that they capture much of the generalization behavior that is determined by the underlying data alone.

\begin{figure}
    \centering
    \vspace{-.5em}
    \includegraphics[width=0.9\columnwidth,clip,trim=0.2in 0in 0.1in 0.1in]{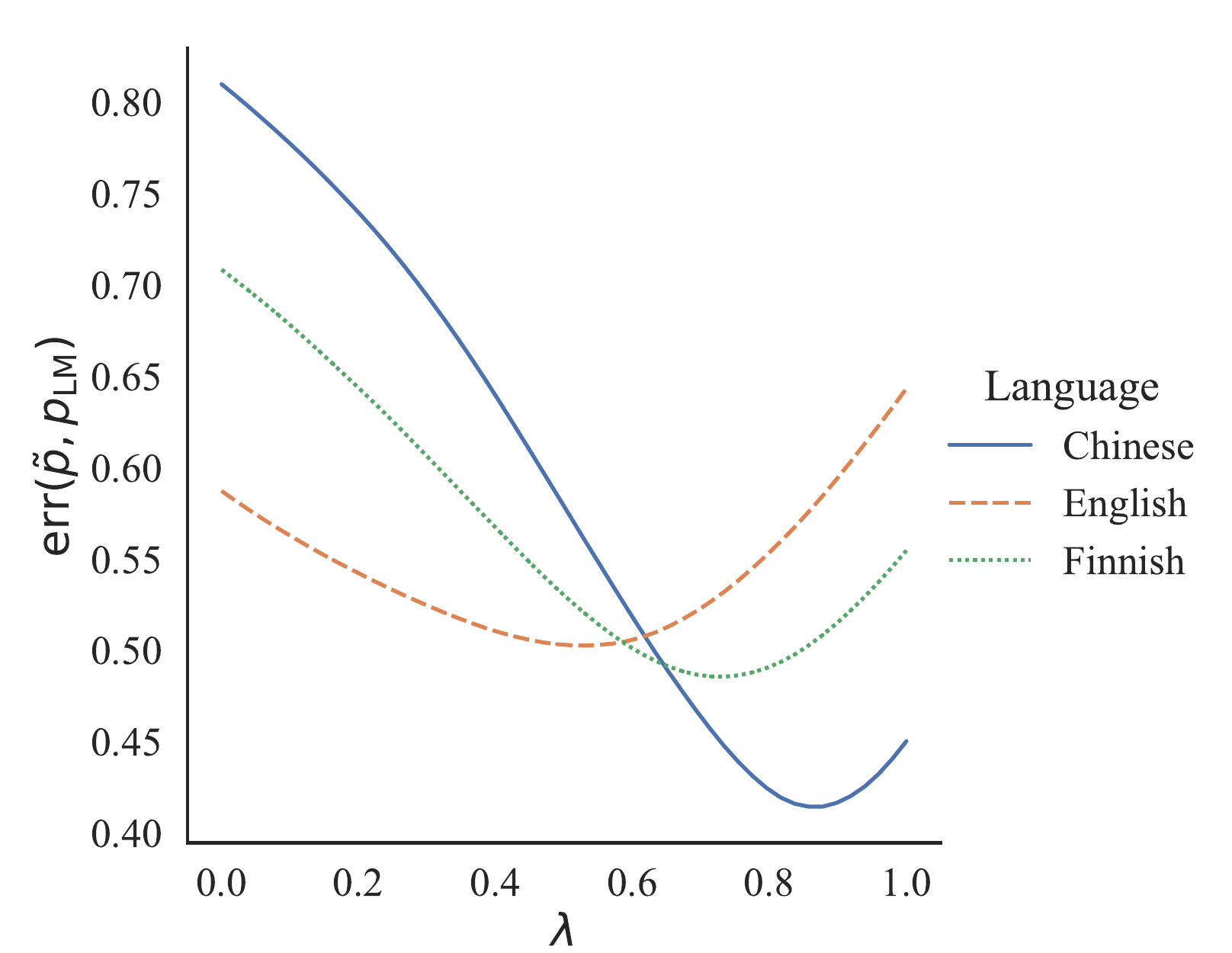}
    \vspace{-1em}
    \caption{Effect of the log-linear interpolation parameter $\lambda_1$ (fixing $\lambda_2 = 1 - \lambda_1$; \cref{eq:llinterp}) when predicting out-of-distribution behavior in  language models. As shown in \cref{fig:base}, in English, an all-local hypothesis ($\lambda=0$) is better than an all-global hypothesis ($\lambda=1$), but true model behavior is best approximated by a log-linear combination of the two ($\lambda \approx 0.5$). Finnish and Chinese are also best approximated by an interpolation, but are closer to the global than the local hypothesis. \vspace{-1em}}. 
    \label{fig:interp-curve}
\end{figure}

\subsection{What controls interpolation?}
\label{sec:experiments-interp}

The previous section showed that $\pinterpmul$ gives the best fit to the empirical distribution of neural LM predictions across contexts: out-of-distribution prediction in both RNNs and transformers involves a mix of global and local information, with the precise weighting of these two sources of information dependent on structural properties of the language being modeled. A natural next question is whether this weighting can be \emph{controlled}: that is, whether modifications can be made to models or training procedures that affect the relative importance of the local and global hypotheses.

In this section, we explore \emph{noise} as a possible source of this control. Models of both perceptual (local) noise and retrieval (global / contextual) noise play a key role in computational models of human sentence processing \cite{levy2008noisy}. In machine learning, various kinds of noise injected at training time---most prominently dropout \cite{srivastava2014dropout}, but also label noise and random word substitution and masking---are widely used as tools to regularize model training and limit overfitting. Here, we investigate whether these noising procedures qualitatively affect the \emph{kind} of generalization behavior that neural LMs exhibit in the out-of-distribution contexts explored in \cref{sec:experiments-base}.

We investigate two kinds of noise: random word substitution and hidden state dropout. In all experiments, this noise is applied at training time only; model inference is run noiselessly when evaluating fit in \cref{eq:acc}.
When computing $\pglobal$ with \cref{eq:beam} in these experiments, $\lmp$ is also trained without noise to approximate $\pglobal$.

\paragraph{Random token substitution} With probability $p$, input tokens are randomly replaced with samples from the unigram distribution. Random word substitution plays an important role in masking-based pretraining schemes \citep{devlin2018bert}.

\paragraph{Hidden state dropout} With probability $p$, features of context representations (RNN hidden states and transformer self-attention outputs) are randomly set to zero \citep{semeniuta2016recurrent}.

\paragraph{}

Results are shown in \cref{fig:interp}. Across languages, word substitution modestly increases the predictive accuracy of the global context model, while sometimes decreasing the accuracy of the local context model.
For natural languages, however, the improvement in the global context model is matched by improvements in the \textit{unigram} baseline, indicating that these results may simply indicate decreased context-dependence. In regular languages, symbol-swapping noise increases the performance of the global hypothesis without improving the baseline, suggesting reliance on global information has actually increased. In all cases, state dropout improves the predictive accuracy of the local context model and often decreases the accuracy of both the global context model and the unigram baseline, suggesting indicating that state dropout encourages local generalization.

\begin{figure}[t]
    \centering
    \includegraphics[width=\columnwidth,clip,trim=0.25in 2in 2.5in 0.25in]{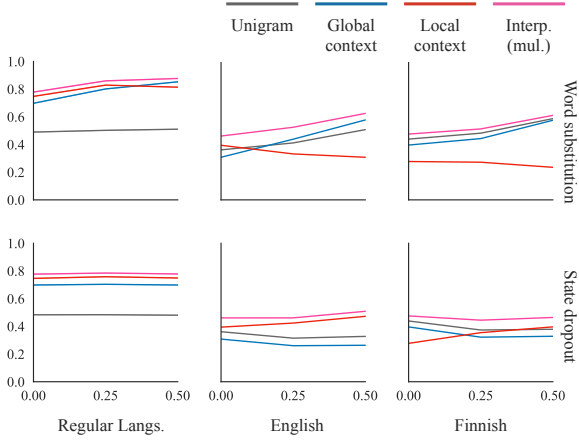}
    \caption{Accuracy $\textrm{acc}(\tilde{p}, \lm)$ of predicted generalization for English, Finnish, and regular-language GRUs when trained with token-swapping noise and state dropout noise. Token swapping sometimes improves the accuracy of the global context model, while state noising improves the accuracy of the local context model. Similar trends occur with Chinese; see Appendix \ref{appendix:extra-interp}.}
    \label{fig:interp}
\end{figure}

\section{Explaining the experiments}
\label{sec:theory}

With empirical evidence that interpolation between local and global models is a good approximation to out-of-distribution language model behavior, we next investigate whether this behavior can be explained theoretically. While we leave for future work a complete answer to this question, 
we conclude with the following proposition, which describes a set of conditions under which LM generalization will be well approximated by log-linear interpolation between $\plocal$ and $\pglobal$.

\begin{proposition}
   \label{prop:main}
   Let $\theta$ be the parameters of a log-linear model optimizing:
   \begin{multline}
        \argmin_\theta \\ - \sum_{X_{1:n}} \log p(X_{n} \mid X_{1:n-1}; \theta) + \lambda \|\theta\|^2
   \end{multline}
   where $p(X_n \mid X_{1:n-1}) \propto \exp\{ \theta_{X_n}^\top \phi(X_{1:n-1}) \}$ and $\phi(\cdot)$
   has an indicator feature for each value of $\Xg$, $\Xl$, and the conjunction $(\Xg, \Xl)$. Suppose further that models with \emph{only} local or global features are boundedly worse than this model: specifically, that:
   \begin{equation}
       \expected |p(X_n \mid \Xg, \Xl) - p(X_n \mid \tilde{X})| < \epsilon
   \end{equation} 
   uniformly
   for \emph{training} $\tilde{X}$ equal to either $\Xg$ or $\Xl$. Then, in surprising contexts, $p(x_n \mid x_{1:n-1}; \theta)$ can be approximated by $\tilde{p}_\times$:
   \begin{align}
        &\Big|p(X_n \mid \Xg, \Xl; \theta) \nonumber \\ & ~~~~ - ~ \tilde{p}_\times(X_n \mid \Xg, \Xl) \Big| < e^{4 \epsilon / \lambda} - 1
   \end{align}
    where
    $\tilde{p}_\times(X_n \mid \Xg, \Xl) \propto \tilde{p}(X_n \mid \Xg)~\tilde{p}(X_n \mid \Xl)$
    and each $\tilde{p}(X_n \mid \Xl \text{ or } \Xg)$ is an $\ell_2$-regularized estimate of the corresponding distribution.
\end{proposition}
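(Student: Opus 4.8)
The plan is to exploit the structure of the $\ell_2$-regularized log-linear objective: because $\phi$ has separate indicator features for $\Xg$, $\Xl$, and the conjunction $(\Xg,\Xl)$, the logit $\theta_{X_n}^\top\phi(X_{1:n-1})$ decomposes as a sum $a_{X_n}(\Xg) + b_{X_n}(\Xl) + c_{X_n}(\Xg,\Xl)$ of a ``global weight,'' a ``local weight,'' and a ``conjunction weight.'' First I would argue that in a \emph{surprising} context the conjunction feature $(\Xg,\Xl)$ is (by the definition in \crefrange{eq:eps}{eq:tau}) essentially unseen in training, so the corresponding weight $c$ is driven toward $0$ by the $\lambda\|\theta\|^2$ penalty — more precisely, the gradient of the data term with respect to $c$ is bounded (it is a sum of at most a handful of soft-classification residuals over the rare conjunction's occurrences), so at the optimum $\|c\| \le (\text{data gradient bound})/(2\lambda)$, which the hypothesis $\epsilon$-bound lets me control. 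Hence in surprising contexts $p(X_n\mid\Xg,\Xl;\theta)$ is within a small multiplicative factor of the ``conjunction-free'' distribution proportional to $\exp\{a_{X_n}(\Xg) + b_{X_n}(\Xl)\}$, i.e.\ proportional to $(\text{something depending only on }\Xg)\times(\text{something depending only on }\Xl)$.

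The second step is to identify those two $\Xg$- and $\Xl$-only factors with $\ell_2$-regularized estimates of $p(X_n\mid\Xg)$ and $p(X_n\mid\Xl)$ respectively. Here I would use the assumed bound $\expected|p(X_n\mid\Xg,\Xl)-p(X_n\mid\tilde X)|<\epsilon$ for $\tilde X\in\{\Xg,\Xl\}$ together with a convexity/first-order-optimality argument: the full model's parameters, restricted to the $\Xg$ features, satisfy a regularized moment-matching condition (the regularized MLE matches empirical feature expectations up to a $\lambda$-sized correction), and the stated $\epsilon$-closeness of the global-only and local-only predictors means these restricted parameters coincide, up to $O(\epsilon/\lambda)$ in log-space, with the parameters one would obtain by fitting an $\ell_2$-regularized model on the $\Xg$ feature alone — which is exactly $\tilde p(X_n\mid\Xg)$ — and symmetrically for $\Xl$. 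Combining with Step 1, $\log p(X_n\mid\Xg,\Xl;\theta)$ and $\log\tilde p_\times(X_n\mid\Xg,\Xl)$ differ by at most $O(\epsilon/\lambda)$ pointwise before renormalization.

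Finally I would convert the log-space bound to a bound on the probabilities themselves and chase the constant. The renormalization is harmless: if $\log q_1$ and $\log q_2$ (unnormalized) agree within $\eta$ coordinatewise, then after softmax the two distributions agree within $e^{2\eta}-1$ in $\ell_\infty$ (the $Z$'s also differ by a factor in $[e^{-\eta},e^{\eta}]$). Tracking that the accumulated log-space discrepancy is at most $4\epsilon/\lambda$ — one factor of $2$ from having two feature groups ($\Xg$ and $\Xl$) each contributing $\epsilon/\lambda$-ish error, another factor of $2$ from the two-sided softmax conversion — yields the claimed $e^{4\epsilon/\lambda}-1$ bound.

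The main obstacle, I expect, is Step 2: making rigorous the claim that the $\Xg$-restricted coordinates of the \emph{jointly} fitted regularized model are close to the coordinates of a model fitted on $\Xg$ \emph{alone}. The joint fit couples all three feature groups, and one must show the coupling is weak precisely because (a) the conjunction weights are small (from Step 1) and (b) the global-only and local-only predictors are $\epsilon$-close to the joint predictor on the training distribution (the hypothesis). I would handle this by writing the optimality (zero-gradient) conditions for the joint problem and for the marginal problem and bounding their difference, using strong convexity of the regularized objective (modulus $2\lambda$) to turn the gradient gap into a parameter gap — this is where the $1/\lambda$ dependence, and hence the $e^{4\epsilon/\lambda}-1$ form of the final bound, comes from. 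The total-variation-to-parameter translation and the softmax Lipschitz estimates are routine and I would relegate them to the appendix.
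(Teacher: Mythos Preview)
Your proposal is correct and matches the paper's approach closely: the paper likewise (i) drops the conjunction weight in surprising contexts, (ii) proves a lemma showing, via the zero-gradient condition of the regularized objective, that the joint model's weight on each shared global or local indicator feature lies within $\epsilon/\lambda$ of the corresponding marginal model's weight, and (iii) converts this log-space perturbation into the $e^{4\epsilon/\lambda}-1$ bound by exactly the numerator/denominator softmax argument you describe. One small correction to Step~1: the $\epsilon$-hypothesis is not what controls $c$ --- for a conjunction $(\Xg,\Xl)$ unseen in training the data gradient with respect to that feature's weight is identically zero, so the $\ell_2$ regularizer forces $c=0$ exactly and no further bounding is needed.
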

\noindent
In other words, for log-linear language models with informative local and global features, the observed effectiveness of multiplicative interpolation is expected. 
Proof is given in \cref{appendix:proofs}. 

It is important to qualify this result in several ways: it relies on a feature function $\phi$ that may not be a realistic representation of the context feature produced by deep network models, involves strong assumptions about the independent predictive power of local and global features at training time, and becomes vacuous for large values of $\epsilon$ or small values of $\lambda$. 
It is weak in absolute sense ($e^{4\epsilon/\lambda}$ grows considerably faster than $\epsilon$ except for very large values of $\lambda$); its function is simply to relate predictions in surprising contexts to measurable properties of the training distribution.
Nevertheless, the result shows that some aspects of interpolation behavior can be predicted from the parametric form of predictors alone; future work might strengthen this claim to more directly characterize the neural network predictors studied in this paper and explain the observed differences across languages.

\section{Conclusion}

When neural sequence models are exposed to out-of-distribution contexts with conflicting local and global information, their behavior can be predicted. Across natural and synthetic data distribution, sequence model generalization appears to be well approximated by either a local ($n$-gram-like) or global (skip-gram-like) predictor, and best approximated by a log-linear interpolation of the two whose weight can sometimes be controlled by noise-based regularization.
This work suggests several avenues for future exploration: first, explaining \emph{data-dependent} aspects of the local--global tradeoff (especially cross-linguistic differences that are not clearly explained by typological differences between languages); second, determining whether \emph{architectural} improvements to standard sequence models can even more effectively target specific kinds of structured generalization.

\section*{Acknowledgments}

This research was supported by the MIT--IBM Watson AI Lab.

\bibliographystyle{acl_natbib}
\bibliography{anthology,custom}

\onecolumn

\appendix

\section{Noise Experiments in Other Settings}
\label{appendix:extra-interp}

We include larger versions of the graphs from Figure \ref{fig:interp} with a few other hypotheses added: the additive interpolation and random restart along with an \emph{ignore} hypothesis that predicts LM generalization having consistent with having never seen the surprising token (i.e.\ using the predictive distribution from \emph{before} the surprising token was observed).

\begin{figure}[H]
    \centering
    \textbf{GRU models on regular languages}\par\medskip
    \includegraphics[width=0.45\textwidth]{"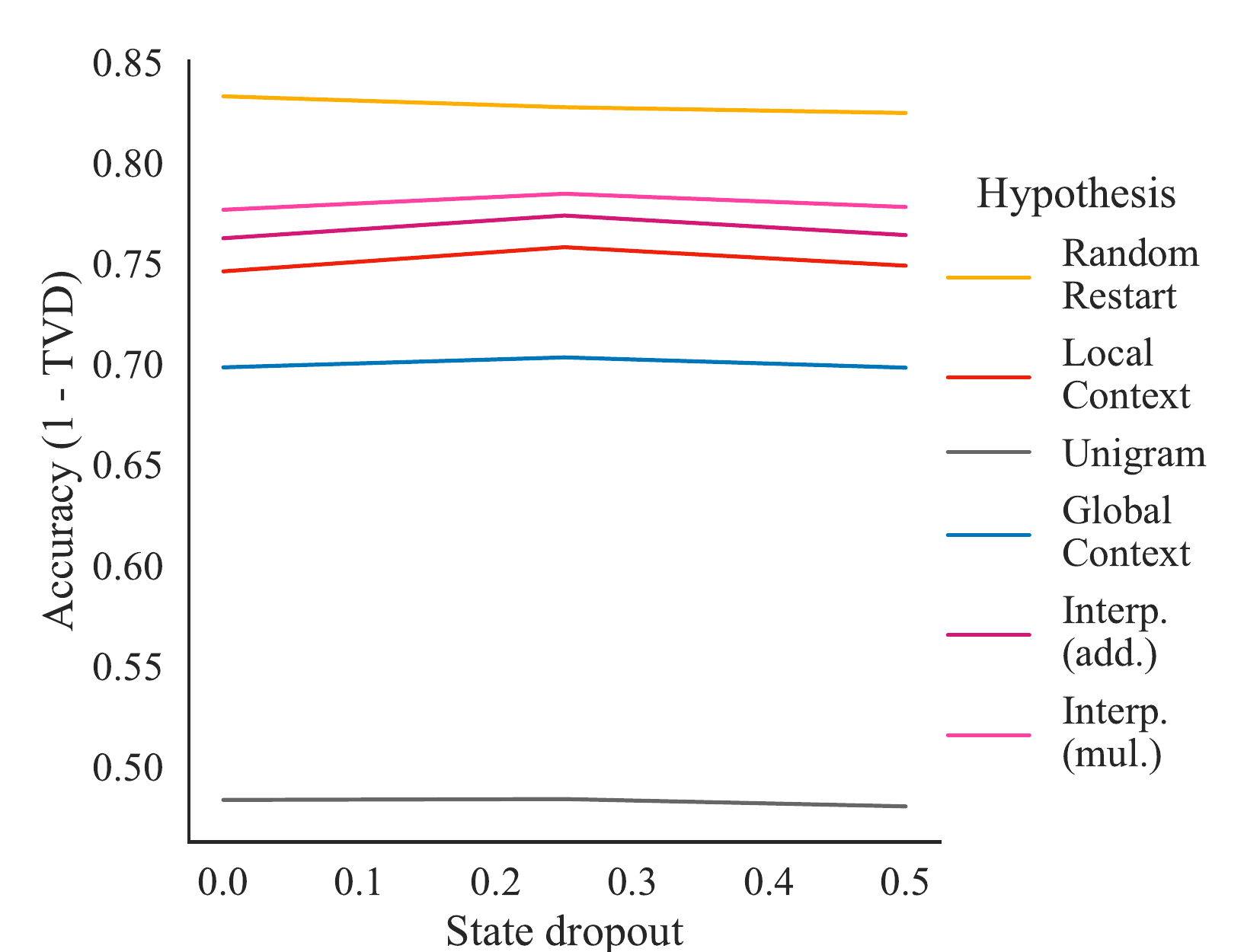"}
    \includegraphics[width=0.45\textwidth]{"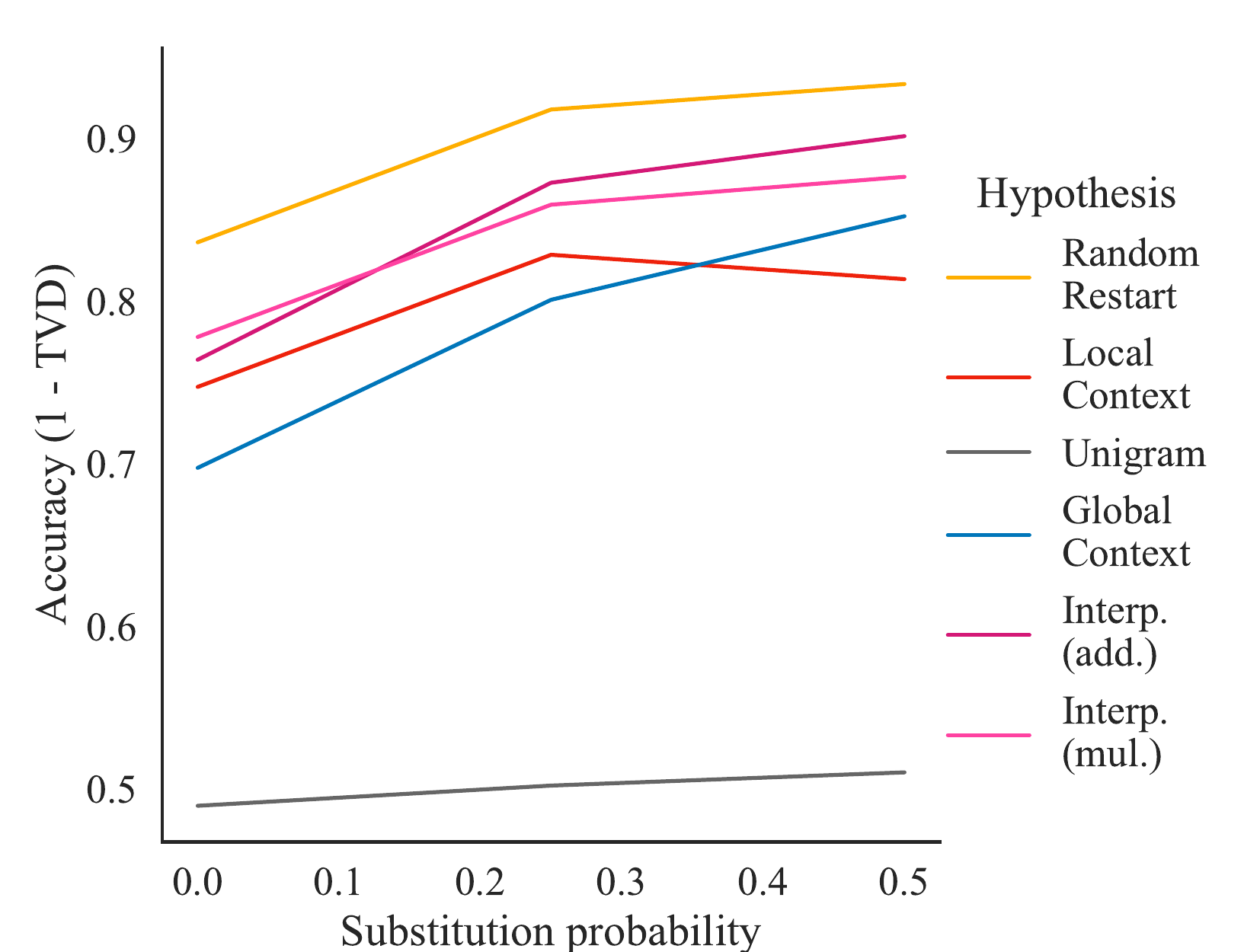"}
    \caption{Detail version of graphs from Figure \ref{fig:interp}.}
    \label{fig:regular-interp}
\end{figure}
\begin{figure}[H]
    \centering
    \textbf{GRU models on English}\par\medskip
    \includegraphics[width=0.45\textwidth]{"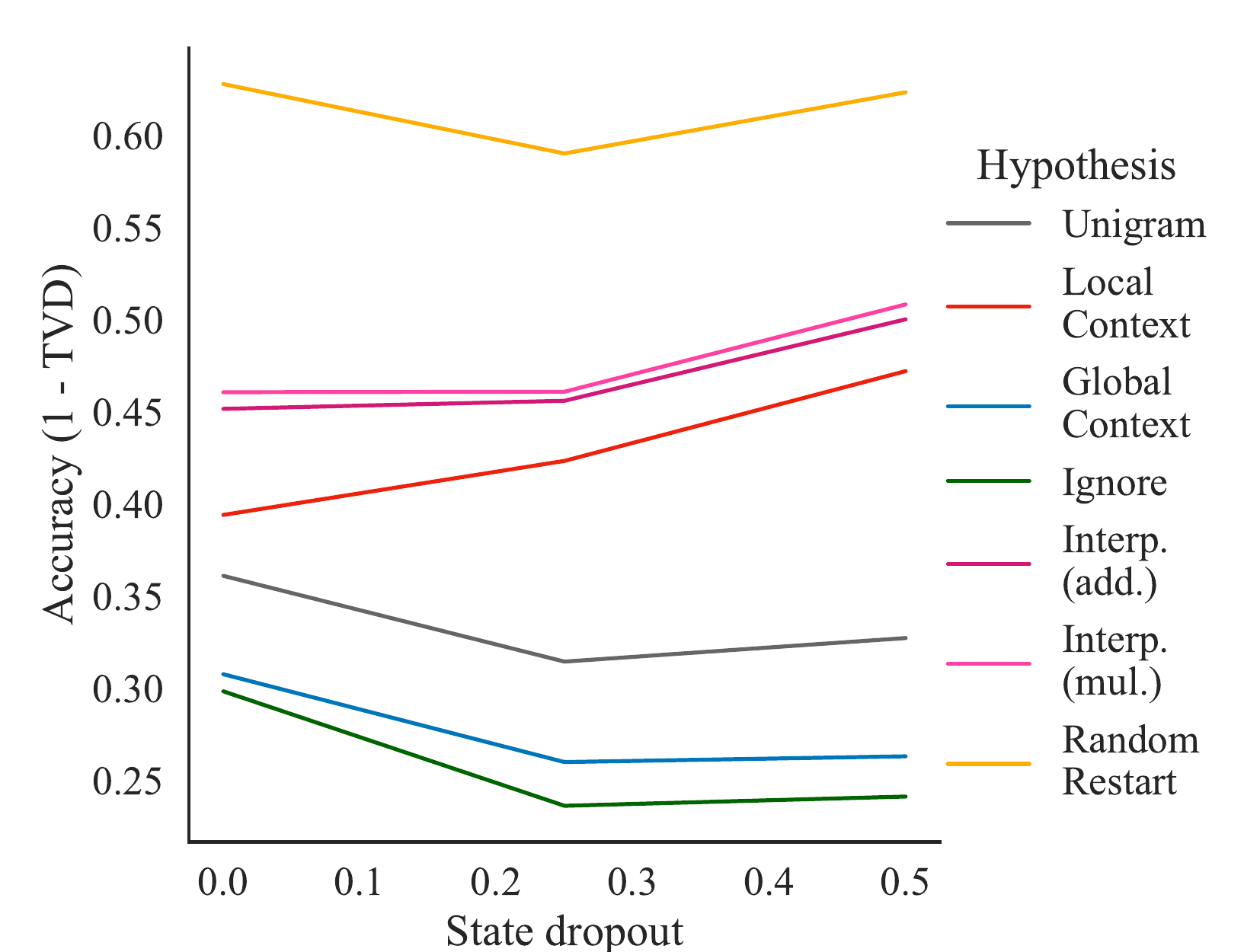"}
    \includegraphics[width=0.45\textwidth]{"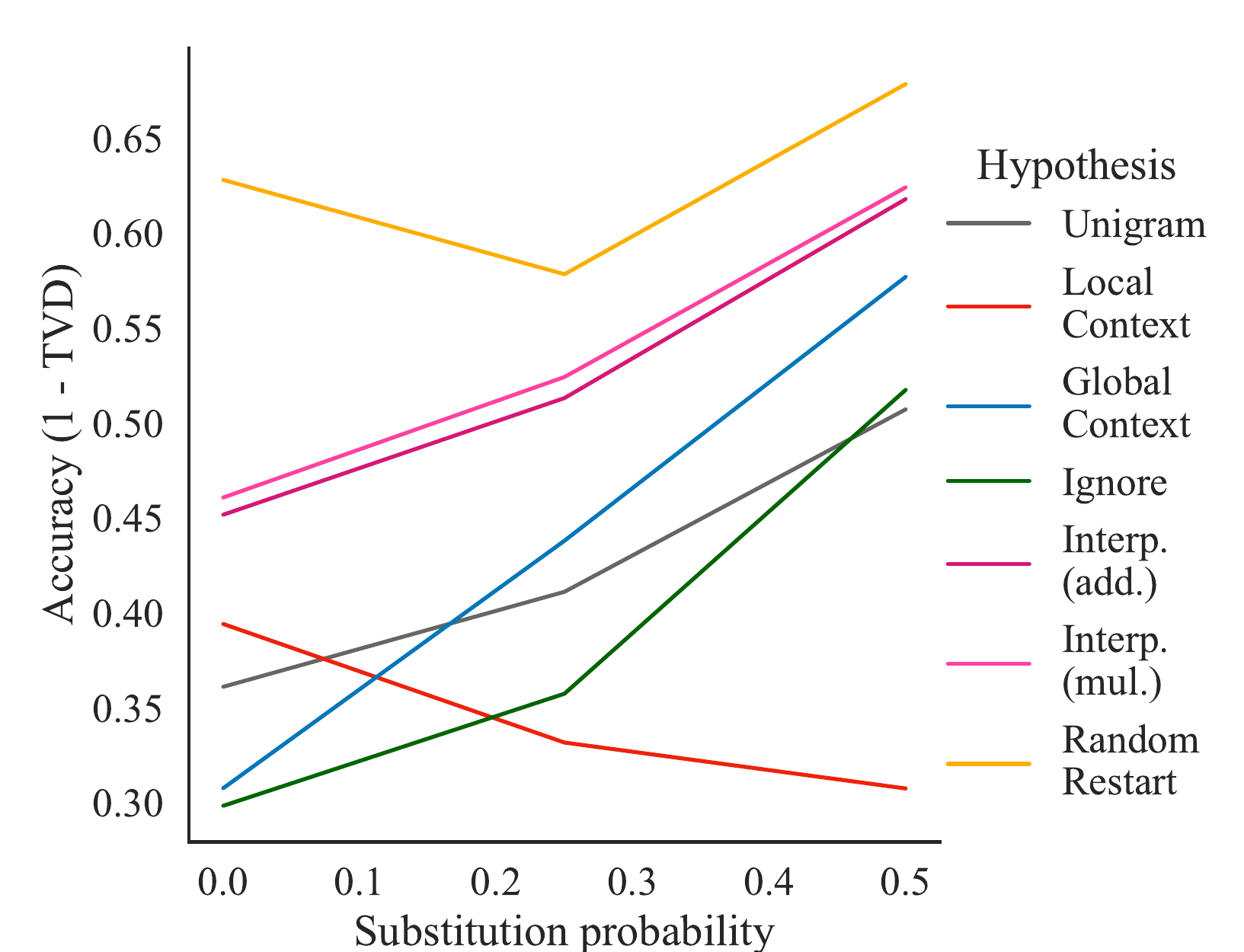"}
    \caption{Detail version of graphs from Figure \ref{fig:interp}.}
    \label{fig:english-interp}
\end{figure}
\begin{figure}[H]
    \centering
    \textbf{GRU models on Finnish}\par\medskip
    \includegraphics[width=0.45\textwidth]{"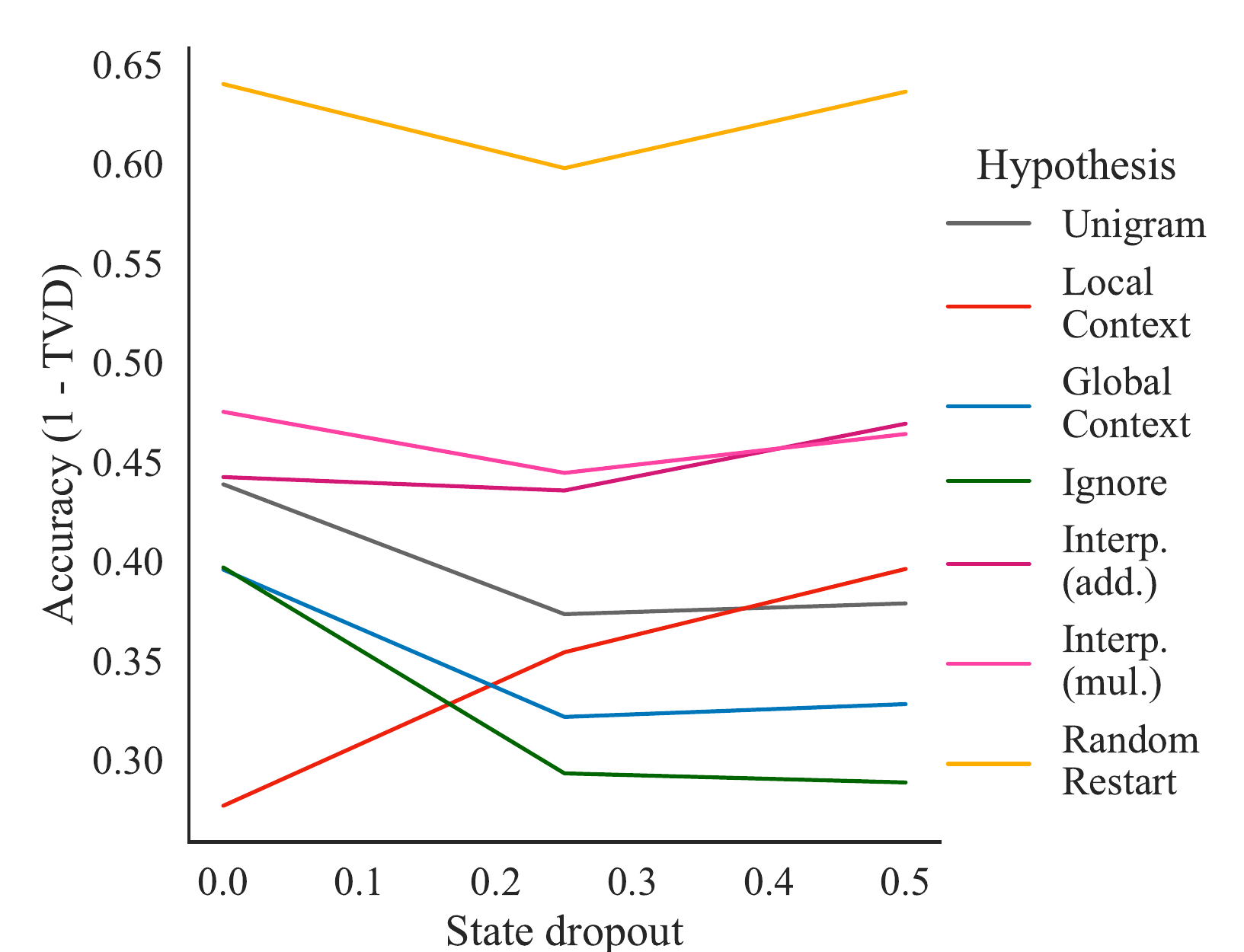"}
    \includegraphics[width=0.45\textwidth]{"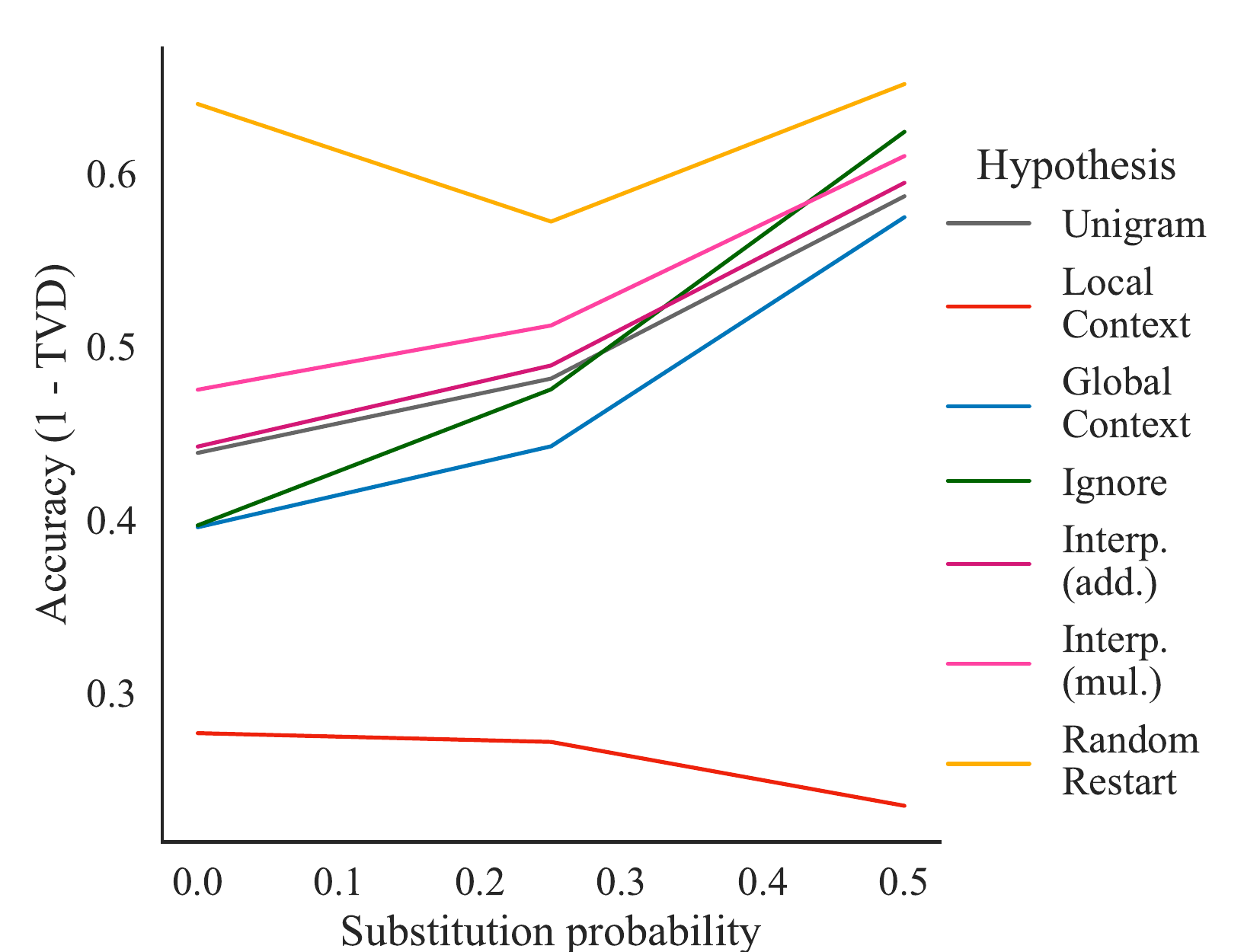"}
    \caption{Detail version of graphs from Figure \ref{fig:interp}.}
    \label{fig:finnish-interp}
\end{figure}

We include graphs of some settings for the noise experiments of Figure \ref{fig:interp} that were omitted from the main paper for space reasons: Chinese GRU models and transformer models on regular languages. The trends are mostly similar.

\begin{figure}[H]
    \centering
    \textbf{GRU models on Chinese}\par\medskip
    \includegraphics[width=0.45\textwidth]{"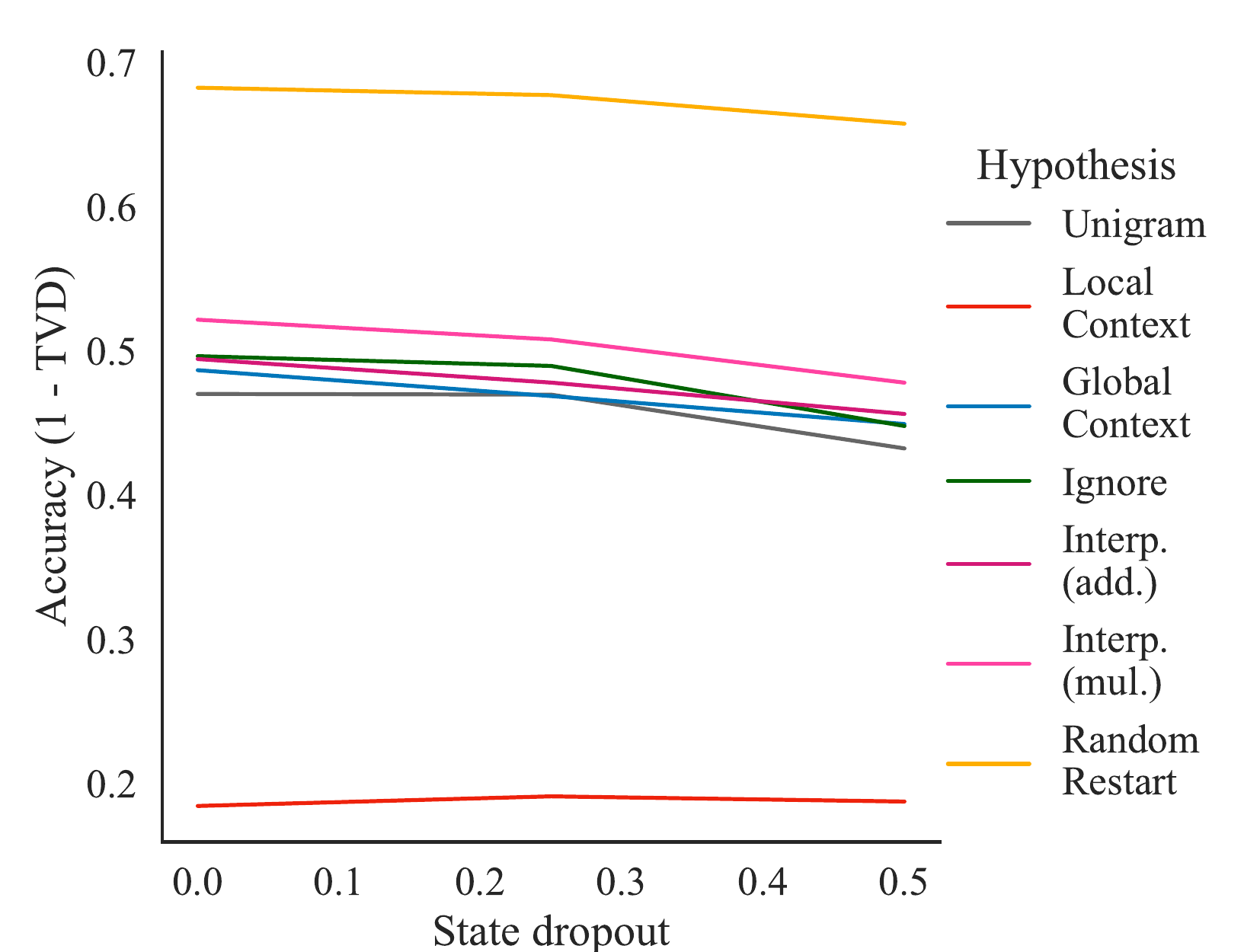"}
    \includegraphics[width=0.45\textwidth]{"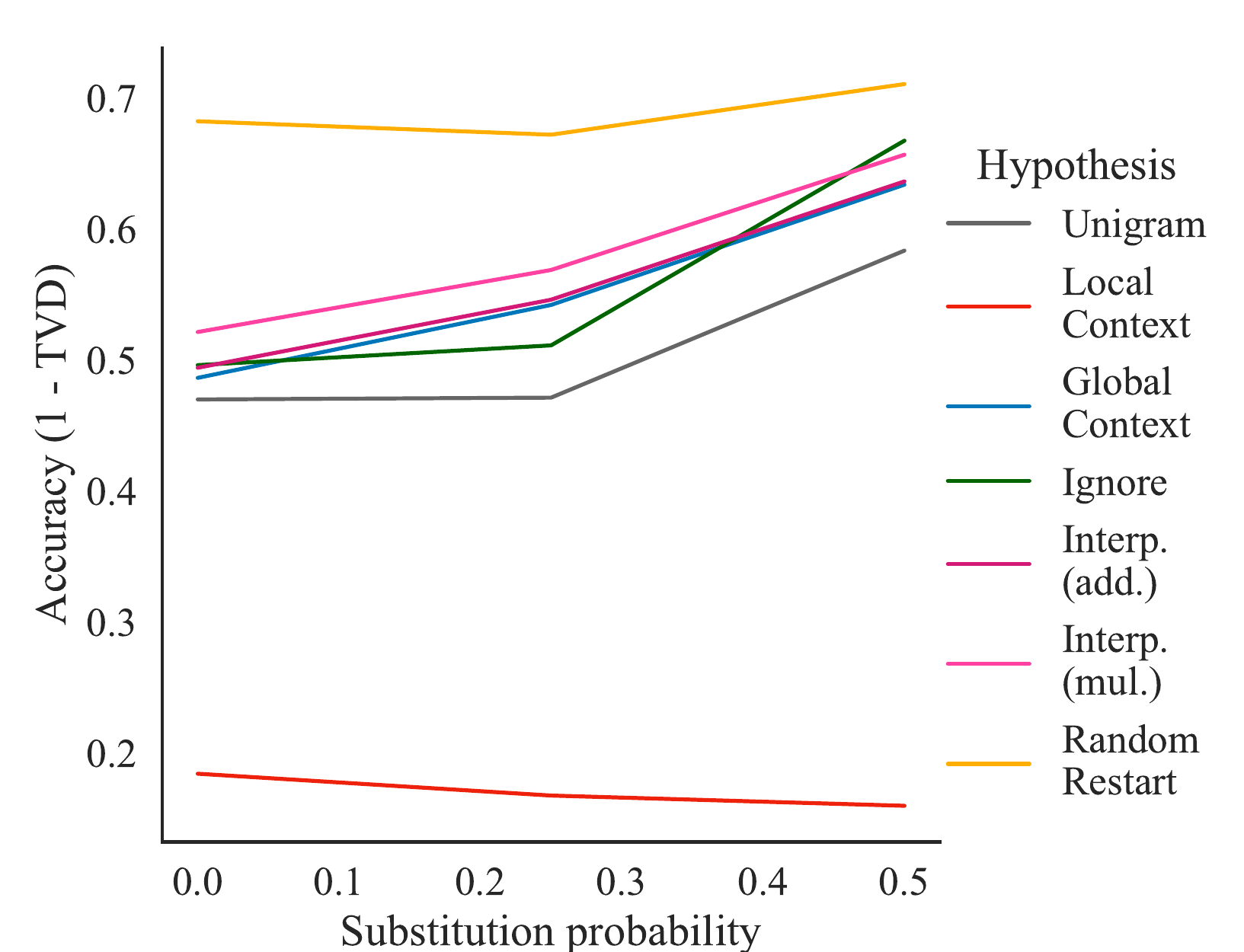"}
    \caption{Graphs showing hypothesis performance for Chinese GRU language models trained under the same noise conditions as in Figure \ref{fig:interp} and similar trends.}
    \label{fig:chinese-interp}
\end{figure}

\begin{figure}[H]
    \centering
    \textbf{Transformer models on regular languages}\par\medskip
    \includegraphics[width=0.45\textwidth]{"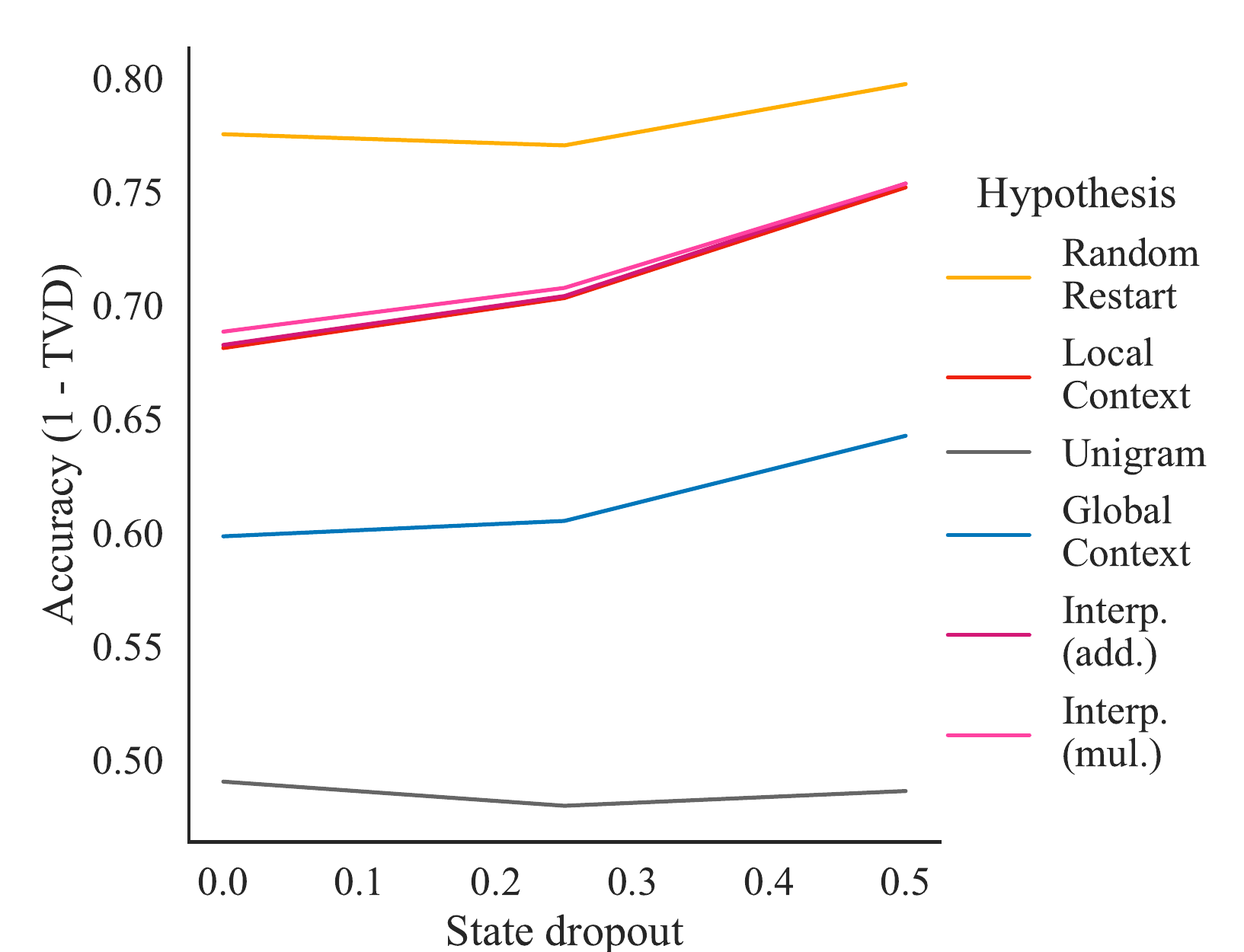"}
    \includegraphics[width=0.45\textwidth]{"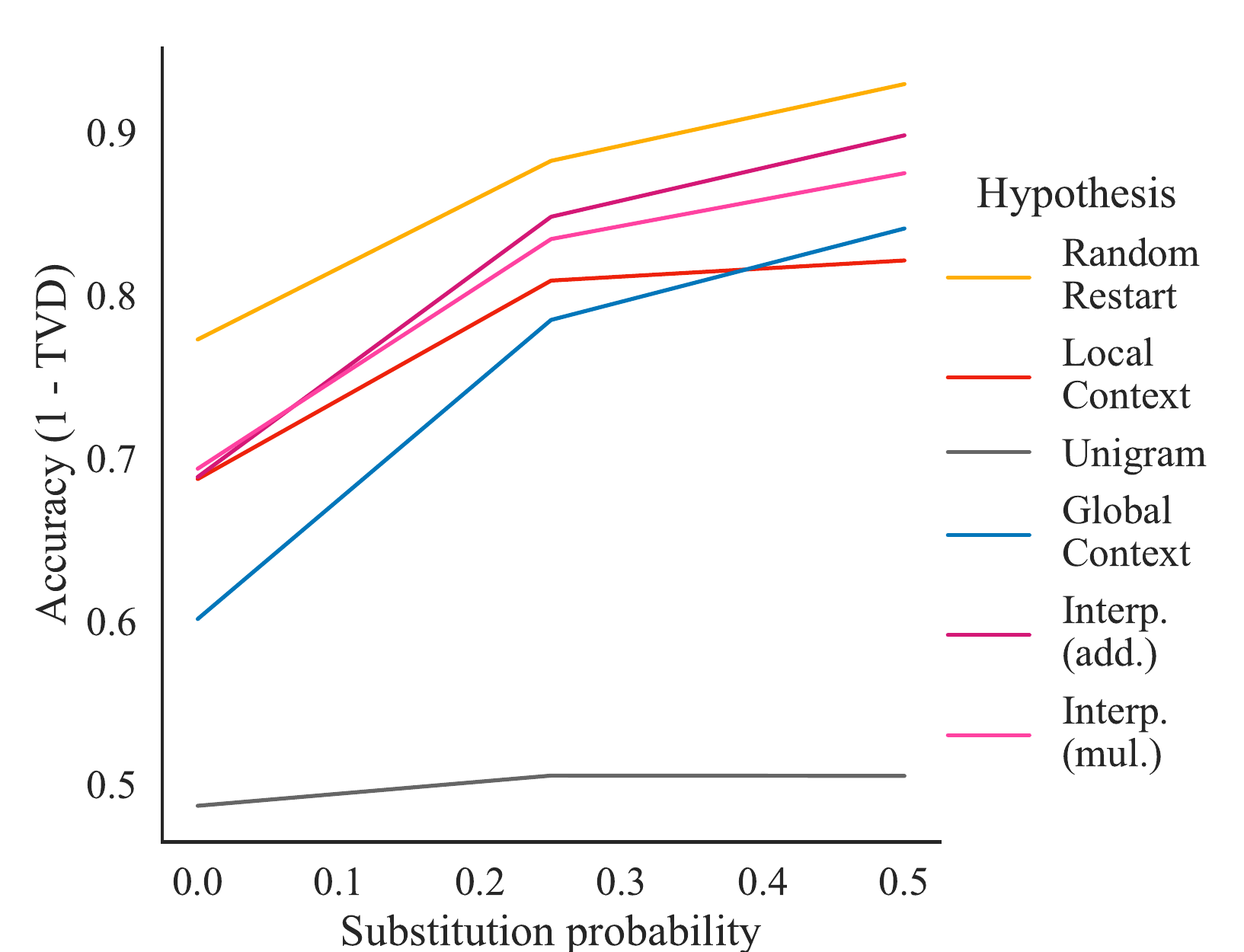"}
    \caption{Graphs showing hypothesis performance for transformer language models for regular languages, trained under the same noise conditions as in Figure \ref{fig:interp} and similar trends.}
    \label{fig:transformer-interp}
\end{figure}

\section{Constructing Finite Automata}
\label{appendix:dfas}

\lstset{language=Python, basicstyle=\ttfamily\footnotesize}
\begin{lstlisting}
input:
    symbols: Set[char]
    states: Set[int]
    num_symbol_uses: int
    num_neighbors: int
    start_state: int
    
while true:
    symbol_counts = dict(s -> num_symbol_uses for s in symbols)
    dfa_edges = set()
    dfa_accepting_states = set()
    for state in states:
        neighbors = uniform(states).sample_without_replacement(n=num_neighbors)
        used_symbols = set()
        for neighbor in neighbors:
            while True: 
                symbols_available = set(
                    s for s in symbols 
                    if symbol_counts[s] > 0 and s not in used_symbols
                )
                if symbols_available.is_empty():
                    break
                symbol = uniform(available_symbols).sample()
                dfa_edges.add(edge(state, symbol, neighbor))
                used_symbols.add(symbol)
                symbol_counts[symbol] -= 1
                if used_symbols.size == symbols.size / states.size:
                    break
        if bernoulli(0.5).sample():
            dfa_accepting_states.add(state)
            
    dfa_edges = prune_unreachable_edges(dfa_edges, start_state)
    if dfa_edges.is_empty():
        continue
        
    return dfa
\end{lstlisting}

\section{Proof of \cref{prop:main}}
\label{appendix:proofs}

We begin with a simple lemma relating parameter weights in regularized log-linear models with mismatched feature sets.

\begin{lemma}
\label{lemma:main}
Let $\phi^1$ and $\phi^2$ be feature distinct functions producing binary feature vectors, with $\phi^1_i(x) = \phi^2_i(x)$ for some $i$. Let $y_1, y_2, \ldots$ be output classes, and $\theta^1 = [\theta^1_{y_1}, \theta^1_{y_2}, \ldots]$ and $\theta^2$ be the result of optimizing:
\begin{align}
    \label{eq:objective}
    &\argmin_\theta -\sum_{x, y} \log p(y \mid x; \theta) + \lambda\|\theta\|^2 \\
    &=\argmin_\theta - \sum_{x, y} \theta_y^\top \phi(x) - \log \sum_{y'} \exp \{ \theta_{y'}^\top \phi(x) \} + \lambda \|\theta\|^2
\end{align}
for each $\phi = \phi^1, \phi^2$. As in \cref{prop:main}, suppose the two feature functions produce similar predictors in the sense that: 
\begin{equation}
       \expected |p(y \mid x, \theta^1) - p(y \mid x, \theta^2)| < \epsilon
\end{equation}
for all training $x$.
Then,
\begin{equation}
    |\theta^1_{y,i} - \theta^2_{y,i}| \leq \frac{\epsilon}{\lambda}
\end{equation}
\end{lemma}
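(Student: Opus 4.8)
The plan is to extract closed-form expressions for the relevant weights of $\theta^1$ and $\theta^2$ from the first-order optimality conditions of \cref{eq:objective} and then subtract. Since the penalty $\lambda\|\theta\|^2$ makes the objective strictly convex, each minimizer $\theta^j$ is the unique stationary point, so it suffices to work with $\nabla = 0$. Differentiating $-\log p(y_{\mathrm{obs}}\mid x;\theta)$ for a softmax model in the coordinate $\theta_{y,i}$ gives the familiar logistic residual $(p(y\mid x;\theta)-\mathbb{1}[y=y_{\mathrm{obs}}])\,\phi_i(x)$; averaging over the training set and adding the penalty's contribution, stationarity reads
\[
  \expected_{(x,y_{\mathrm{obs}})}\!\big[\,(p(y\mid x;\theta)-\mathbb{1}[y=y_{\mathrm{obs}}])\,\phi_i(x)\,\big] + 2\lambda\,\theta_{y,i} = 0 ,
\]
so that $\theta_{y,i} = -\tfrac{1}{2\lambda}\,\expected_{(x,y_{\mathrm{obs}})}[(p(y\mid x;\theta)-\mathbb{1}[y=y_{\mathrm{obs}}])\,\phi_i(x)]$.

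Next I would write this identity once for $(\phi^1,\theta^1)$ and once for $(\phi^2,\theta^2)$ at the common coordinate $i$ and subtract. The decisive point is that by hypothesis $\phi^1_i(x)=\phi^2_i(x)$ for \emph{every} training input $x$, so the label-dependent term $\expected[\mathbb{1}[y=y_{\mathrm{obs}}]\,\phi_i(x)]$ is identical in the two equations and cancels, leaving
\[
  \theta^1_{y,i}-\theta^2_{y,i} \;=\; -\tfrac{1}{2\lambda}\,\expected_{x}\!\big[\,(p(y\mid x;\theta^1)-p(y\mid x;\theta^2))\,\phi_i(x)\,\big] .
\]
Since $\phi_i$ is a binary feature we have $|\phi_i(x)|\le 1$, so Jensen's inequality on the expectation together with the assumed bound $\expected_x|p(y\mid x;\theta^1)-p(y\mid x;\theta^2)|<\epsilon$ gives $|\theta^1_{y,i}-\theta^2_{y,i}|\le \tfrac{1}{2\lambda}\epsilon \le \tfrac{\epsilon}{\lambda}$, the claimed inequality.

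I expect all the real content to sit in the cancellation step, with everything else being bookkeeping: the softmax gradient and the existence/uniqueness of the minimizers are standard, and the only thing to be careful about is that the data term of \cref{eq:objective} and the expectation in the hypothesis are normalized consistently (reading the written $\sum$ as an average over the training set, equivalently rescaling $\lambda$ by the sample size), since a sum-vs-average mismatch would smuggle the training-set size into the constant. The factor of $\tfrac12$ that the computation actually produces is harmless slack absorbed into the stated bound. Downstream, this lemma is applied in \cref{appendix:proofs} with $\phi^1$ the full local+global+conjunction feature map and $\phi^2$ its restriction to the local-only (or global-only) features, with $i$ ranging over the shared local (or global) indicators: this shows the full model's learned local and global weights are each within $\epsilon/\lambda$ of the corresponding backoff model's, and in a surprising context the conjunction features are inactive, so the full model's logits collapse to a sum of these near-matching local and global contributions, which is exactly $\tilde{p}_\times$.
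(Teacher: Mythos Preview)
Your proposal is correct and follows essentially the same argument as the paper: set the gradient of the regularized log-likelihood to zero, solve for $\theta_{y,i}$ in terms of the softmax residuals weighted by $\phi_i$, subtract the two stationarity conditions so that the label-dependent term cancels (since $\phi^1_i=\phi^2_i$), and bound the remaining difference using $|\phi_i|\le 1$ and the assumed $\epsilon$-closeness of the predictors. Your version is in fact slightly more careful than the paper's---you track the factor of $2$ from differentiating $\lambda\|\theta\|^2$ (yielding the sharper $\epsilon/(2\lambda)$ before discarding the slack) and you flag the sum-versus-expectation normalization issue that the paper glosses over.
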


\begin{proof}
\cref{eq:objective} is convex, and at optimality its gradient with respect to each component of $\theta$ is 0. Then, for any $\theta_{v, i}$ we have:
\begin{align}
    0 &= \sum_{x, y = v} \phi_i(x) (-1 + p(y \mid x; \theta))
    + \sum_{x, y \neq v} \phi_i(x) p(y \mid x; \theta) + \lambda \theta_{v,i} \\
    \theta_{y,i} &= \frac{1}{\lambda} \Big( \sum_{x, y = v} \phi_i(x) (1 - p(y \mid x; \theta))
    + \sum_{x, y \neq v} - \phi_i(x) p(y \mid x; \theta) \Big) \\
    |\theta^1_{y,i} - \theta^2_{y,i}| &= \frac{1}{\lambda} \sum_{x, y} \phi_i \big| p(y \mid x; \theta_1) - p(y \mid x; \theta_2) \big| \\
    &\leq \frac{\epsilon}{\lambda}
\end{align}
\end{proof}

\noindent
We can then obtain the main result:
\begin{proof}[Proof of \cref{prop:main}]

\newcommand\thetagg{\eta}
\newcommand\phig{\phi}
\newcommand\thetal{\mu}
\newcommand\phil{\phi}

First note that we can estimate both local and global models using distributions of the form:
\begin{equation}
    p(X_n \mid \Xg) \propto \exp \{ \thetagg^\top \phig(\Xg) \}
\end{equation}
where $\phig(\Xg)$ contains only global features.
(and similarly for $\Xl$). If these models are trained with the same regularization constant $\lambda$, the conditions of \cref{lemma:main} will be satisfied with respect to each local or global feature. Then,
\begin{align}
    &\Big|p(X_n \mid X_{1:n-1}; \theta) ~ - ~ (1/Z) ~ p(X_n \mid \Xg) ~ p(X_n \mid \Xl)\Big|
    \\
    &= 
    \Bigg| \frac{\exp \{ \theta_{X_n}^\top \phi(\Xg, \Xl) \}}{\sum_v \{\exp \theta_v^\top \phi(\Xg, \Xl) \}} ~ - ~ \frac{\exp \{ \thetagg_{X_n}^\top \phig(\Xg) \} \cdot \exp \{ \thetal_{X_n}^\top \phil(\Xl) \} }{\sum_v \exp \{ \thetagg_v^\top \phig(\Xg) \} \cdot \exp \{ \thetal_{X_n}^\top \phil(\Xl) \} } \Bigg| \\
    \intertext{Here we have used $\theta$ to denote the parameters of the full model, $\thetagg$ to denote the parameters of the global-only model, and $\thetal$ to denote the parameters of the local-only model. Because each $\phi$ has an indicator for local, global, and (local, global) values, only two features are active in surprising contexts, corresponding to $\Xg$ and $\Xl$. We will denote the indices of these weights $i$ and $j$ in each weight vector:}
    &= \Bigg| \frac{\exp \{ \theta_{X_n, i} + \theta_{X_n, j} \} }{\sum_v \exp \{ \theta_{v, i} + \theta_{v, j} \} } - \frac{\exp \{ \thetagg_{X_n, i} + \thetal_{X_n, j} \} }{\sum_v \exp \{ \thetagg_{v,i} + \thetal_{v, j} \} } \Bigg|\\
    \intertext{Without loss of generality, assume the first term is larger than the second. By applying \cref{lemma:main}, we can rewrite $\theta$ in terms of $\thetagg$ and $\thetal$. For shorthand, we will write $\Delta = \epsilon / \lambda$:}
    &\leq \frac{\exp \{ \thetagg_{X_n, i} + \Delta + \thetal_{X_n, j} + \Delta \} }{\sum_v \exp \{ \thetagg_{v, i} - \Delta + \thetal_{v, j} - \Delta \} } - \frac{\exp \{ \thetagg_{X_n, i} + \thetal_{X_n, j}\} }{\sum_v \exp \{ \thetagg_{v,i} + \thetal_{v, j} \} } \\
    &= \Big(e^{4\Delta}-1\Big)  \frac{\exp \{ \thetagg_{X_n, i} + \thetal_{X_n, j} \} }{\sum_v \exp \{ \thetagg_{v,i} + \thetal_{v, j} \} } \\
    &\leq e^{4 \epsilon / \lambda} - 1
\end{align}
Note that this proof is not specific to local and global feature representations, but applies generically to \emph{any} pair of regularized log-linear models with overlapping features and similar predictions. The proof is adapted from Theorem 1 of \citet{zou2005regularization}, and we expect that it could be strengthened (as done there) to depend only on correlations between features of $\phi(\Xg, \Xl)$ and each of $\{ \phi(\Xg), \phi(\Xl) \}$.
\end{proof}

\end{document}